\documentclass[numsec,webpdf,modern,medium,namedate]{oup-authoring-template}

\onecolumn

\graphicspath{{Fig/}}

\theoremstyle{thmstyleone}%
\newtheorem{theorem}{Theorem}%  meant for continuous numbers
\theoremstyle{thmstyletwo}%
\theoremstyle{thmstylethree}%
\newtheorem{definition}{Definition}
\newtheorem{lemma}{Lemma}
\newtheorem{corollary}{Corollary}
\newtheorem{assumption}{Assumption}
\usepackage[doublespacing]{setspace}
\usepackage[fontsize=12pt]{fontsize}
\usepackage{tikz}
\usetikzlibrary{positioning}

\begin{document}

\journaltitle{Incompleteness of AI Safety Verification via Kolmogorov Complexity}
% \DOI{DOI HERE}
\copyrightyear{2026}
\pubyear{XXXX}
% \access{Advance Access Publication Date: Day Month Year}
\appnotes{Original article}

\firstpage{1}

\title[Limits of AI Safety Verification]{Incompleteness of AI Safety Verification via Kolmogorov Complexity}

\author[1,$\ast$]{Munawar Hasan\ORCID{0000-0000-0000-0000}}

\authormark{Munawar Hasan}

\address[1]{\orgdiv{Mechanical and Aerospace Engineering}, \orgname{Michigan Technological University}, \orgaddress{\street{Houghton}, \state{MI}, \country{USA}}}

\corresp[$\ast$]{Address for correspondence. Munawar Hasan, MTU, Houghton, MI, 49931, USA. \href{munawarh@mtu.edu}{munawarh@mtu.edu}}

\abstract{
Ensuring that artificial intelligence (AI) systems satisfy formal safety and policy constraints is a central challenge in safety-critical domains. While limitations of verification are often attributed to combinatorial complexity and model expressiveness, we show that they arise from intrinsic information-theoretic limits.
We formalize policy compliance as a verification problem over encoded system behaviors and analyze it using Kolmogorov complexity. We prove an incompleteness result: for any fixed sound computably enumerable verifier, there exists a threshold beyond which true policy-compliant instances cannot be certified once their complexity exceeds that threshold. Consequently, no finite formal verifier can certify all policy-compliant instances of arbitrarily high complexity.
This reveals a fundamental limitation of AI safety verification independent of computational resources, and motivates proof-carrying approaches that provide instance-level correctness guarantees.}

\keywords{Kolmogorov Complexity, AI Safety Verification, Incompleteness, Verifiable AI}

\maketitle

\section{Introduction}
\label{sec:introduction}
A central challenge in modern artificial intelligence (AI) is guaranteeing that autonomous and intelligent systems behave in accordance with prescribed safety and policy constraints. In safety-critical domains such as autonomous driving, multi-agent perception, and secure decision-making, systems’ outputs must satisfy formally specified properties, including collision avoidance, cross-modal consistency, and adherence to regulatory or operational constraints. These requirements naturally give rise to a fundamental verification problem: given an input-output pair and a policy specification, can one certify that the system’s behavior is compliant?

Existing approaches to AI safety and verification largely rely on empirical validation, statistical testing, and formal checking of specific properties. While these methods have achieved significant progress, their limitations are typically attributed to practical factors such as combinatorial explosion of environments, continuous state spaces, and the complexity of learned models. In this work, we argue that such limitations are not merely practical, but fundamental.

Recent work has established information-theoretic limitations in AI security and alignment~\citep{vassilev2025robustaisecurityalignment}, drawing connections to incompleteness phenomena rooted in G\"odel’s theorem~\citep{godel1931formal} and its information-theoretic formulation due to Chaitin~\citep{chaitin1974information}. These perspectives suggest that complete safety guarantees may be inherently unattainable in complex systems. In this work, we provide a formal Kolmogorov complexity-based characterization of these limitations in the setting of policy verification, together with concrete implications for the design of verifiable AI systems.

We adopt a formal perspective in which AI behavior is encoded as finite binary strings and policy compliance is expressed as a predicate over these encodings, so that verification corresponds to proving that a given instance satisfies the policy. We analyze the limits of such verification through the lens of Kolmogorov complexity~\citep{li2008introduction}, which provides a canonical measure of the information content of an individual instance.

Our main result establishes an information-theoretic incompleteness result for AI policy verification: for any fixed sound computably enumerable verification system, there exists a threshold beyond which true policy-compliant instances cannot be certified once their Kolmogorov complexity exceeds that threshold. Consequently, no finite formal verifier can capture all policy-compliant instances of arbitrarily high complexity, revealing an intrinsic limitation of verification that is independent of computational resources.

This limitation has important implications for the design of trustworthy AI systems. In particular, it suggests that universally complete verification is unattainable and motivates a shift toward alternative paradigms in which correctness is established through structured, instance-specific evidence rather than exhaustive characterization of all valid behaviors. This perspective aligns with emerging designs in verifiable AI and secure perception systems, such as the Hermes framework~\citep{hasan2026hermessealzeroknowledgeassurance}, where correctness is established through instance-level constraints and verifiable artifacts in the form of succinct proofs rather than global detection. In such systems, participants provide structured outputs that satisfy formal constraints, and correctness is validated through explicit verification procedures.

More broadly, this shift reflects a change in how correctness is established in complex AI systems: from attempting to characterize all valid behaviors, to verifying the validity of individual instances. This distinction becomes especially important in settings where the space of possible inputs and behaviors is large and highly variable, rendering exhaustive verification infeasible. Our incompleteness result provides a theoretical foundation for this design philosophy, showing that no fixed verifier can certify all valid behaviors and motivating a reliance on per-instance verifiability rather than global completeness.

\paragraph{Contributions.}
\begin{itemize}
    \item We formalize AI policy compliance as a verification problem over encoded system behaviors of the form $x = \langle z, y, \Pi \rangle$.
    \item We establish an information-theoretic incompleteness result via Kolmogorov complexity, showing that no fixed formal verifier can certify all policy-compliant instances of sufficiently high complexity.
    \item We connect these theoretical limits to the design of verifiable AI systems, providing a principled justification for proof-carrying and certificate-based approaches to instance-level verification.
\end{itemize}

Our result does not assume incompleteness, but derives it from fundamental information-theoretic limits. The existence of arbitrarily high-complexity instances, together with the finite descriptive capacity of any computable verifier, implies that some true policy-compliant instances cannot be certified within a fixed system. This limitation arises at the level of universal verification rather than instance-level certification, and motivates a shift toward proof-carrying approaches in which correctness is established through instance-specific certificates.

\section{Related Work}
\label{sec:related-work}
\textbf{Incompleteness and algorithmic information theory.}
Our work is rooted in classical results on the limits of formal reasoning, most notably G\"odel’s incompleteness theorems~\citep{godel1931formal} and their information-theoretic formulations via Kolmogorov complexity. Chaitin~\citep{chaitin1974information} showed that any sufficiently strong formal system cannot prove statements asserting that the Kolmogorov complexity of a string exceeds a given threshold. This perspective establishes a fundamental connection between provability and information content, which we adopt in the context of AI policy verification.

\textbf{Kolmogorov complexity in learning and generalization.}
Kolmogorov complexity has long been studied as a theoretical foundation for learning and model selection, including the Minimum Description Length (MDL)~\citep{grunwald2007minimum} principle. Prior work has used description length to characterize generalization, compression, and inductive bias in machine learning. In contrast, our focus is not on learning performance, but on the limits of verifying properties of individual instances. We use Kolmogorov complexity as a measure of instance-level information content to establish limits of formal verification.

\textbf{Formal verification of AI systems.}
A growing body of work studies formal verification~\citep{katz2017reluplexefficientsmtsolver, huang2017safetyverificationdeepneural} of neural networks and autonomous systems, including reachability analysis, constraint solving, and robustness certification. These approaches aim to provide guarantees about system behavior under specified conditions, and are typically evaluated in terms of scalability and coverage~\citep{cohen2019certifiedadversarialrobustnessrandomized}. Our work complements this line of research by identifying a fundamental limitation: even in principle, no fixed formal verifier can certify all policy-compliant instances once their Kolmogorov complexity exceeds a threshold.

\textbf{AI safety and policy compliance.}
Recent work in AI safety has explored methods for ensuring that system outputs satisfy policy constraints, including rule-based filtering, constrained optimization, and post-hoc verification~\citep{amodei2016concreteproblemsaisafety, 8418593}. These approaches often assume that policy-compliant behavior can be characterized or approximated within a sufficiently expressive system. Our results challenge this assumption by showing that complete certification of policy compliance is impossible for any fixed formal verifier, independent of computational resources.

\textbf{Verifiable computation and proof systems.}
There has been significant progress in cryptographic proof systems, including verifiable computation frameworks~\citep{gennaro2010verifiable} and succinct zero-knowledge proofs such as zk-SNARKs~\citep{groth2016snark} and related systems~\citep{ben2019scalable}, which enable efficient verification of complex computations. These techniques provide a mechanism for certifying correctness via explicit proofs rather than exhaustive validation. Our work provides a theoretical justification for such approaches: since universal verification is fundamentally incomplete, proof-carrying paradigms offer a principled alternative by shifting verification to instance-specific certificates.

These approaches address computational and algorithmic aspects of verification, whereas our result identifies a fundamental information-theoretic limitation. We provide an information-theoretic incompleteness result for AI policy verification, showing that these limitations are not merely practical, but arise from intrinsic bounds on information and provability.

\section{Preliminaries and Problem Formulation}

\subsection{Kolmogorov Complexity}

Kolmogorov complexity provides a measure of the information content of an individual object. Let $U$ be a fixed universal prefix-free Turing machine. The prefix Kolmogorov complexity of a string $x \in \{0,1\}^*$ is defined as
\[
K(x) := \min \{\, |q| : U(q) = x \,\}.
\]

\subsection{Policy-Compliance Model}
We model an AI interaction as a finite binary encoding
\[
x = \langle z, y, \Pi \rangle,
\]
where:
\begin{itemize}
    \item $z$ is the input (e.g., scene, observation),
    \item $y$ is the system output (e.g., action, prediction),
    \item $\Pi$ is a formal policy specification, such as collision avoidance constraints, minimum stopping distance requirements, or consistency conditions over system outputs.
\end{itemize}

This encoding represents a single AI interaction instance, capturing the input, the system's output, and the governing policy in a unified binary form. This allows us to treat policy compliance as a property of a single encoded instance.

\begin{definition}[Policy-compliance]
\label{def:policy}
Define the predicate $P : \{0,1\}^* \to \{0,1\}$ by
\[
P(x)=1 \iff \text{output } y \text{ satisfies policy } \Pi \text{ on input } z.
\]
\end{definition}

\subsection{Proof System}

We model verification as the ability of a formal system to certify policy compliance, i.e., to prove statements of the form $P(x)$.

\begin{definition}[Formal verifier]
\label{def:proof}
Let $T$ be a sound computably enumerable formal theory. A proof checker 
\[
C_T(\varphi,\pi) : \{0,1\}^* \times \{0,1\}^* \to \{0,1\}
\]
returns $1$ if $\pi$ is a valid proof of $\varphi$ in $T$, and $0$ otherwise.
\end{definition}
This definition models verification as provability within a formal system. In our setting, we are interested in statements of the form $\varphi = P(x)$, where $x = \langle z, y, \Pi \rangle$ encodes an AI interaction, and verification corresponds to the existence of a proof $\pi$ such that $C_T(P(x), \pi) = 1$.

\section{Kolmogorov Complexity Limits of Policy Verification}
We now analyze the limits of policy verification through the lens of Kolmogorov complexity. Our approach proceeds in two steps: first, we show that policy-compliant instances can have arbitrarily high complexity under a mild richness assumption; second, we establish that such instances cannot all be certified by any fixed formal verifier.
\subsection{High-Complexity Policy-Compliant Instances}

\begin{assumption}[Richness]
\label{ass:rich}
There exists a constant $d$ such that for infinitely many $m$,
\[
|\{x \in \{0,1\}^m : P(x)=1\}| \ge 2^{m-d}.
\]
\end{assumption}
This assumption captures the fact that policy-compliant behaviors are not confined to a small, highly structured subset of the space. In practical systems, many distinct configurations satisfy safety constraints, leading to a large and diverse set of valid instances. Importantly, the richness assumption does not require the policy $\Pi$ itself to be complex; rather, it reflects variability in inputs $z$ and outputs $y$, which can give rise to a vast number of distinct compliant instances. Even simple policies may therefore admit many instances with high Kolmogorov complexity.

This assumption is mild and holds in settings where the space of valid system behaviors is large, such as perception-driven or environment-dependent systems, where diverse inputs can yield many distinct policy-compliant outputs.

\begin{lemma}[High-complexity compliant instances]
\label{lem:kc}
Under Assumption~\ref{ass:rich}, for infinitely many $m$, there exists $x$ such that
\[
P(x)=1 \quad \text{and} \quad K(x) \ge m-d.
\]
\end{lemma}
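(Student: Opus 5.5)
The plan is a counting (pigeonhole) argument comparing the number of compliant strings of length $m$ with the number of short programs. Fix any $m$ from the infinite set supplied by Assumption~\ref{ass:rich}, and let
\[
S_m := \{x \in \{0,1\}^m : P(x)=1\}, \qquad |S_m| \ge 2^{m-d}.
\]
I will show that some $x \in S_m$ has $K(x) \ge m-d$. Since this works for every such $m$, and there are infinitely many of them, the lemma follows.

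The key step is to bound the number of strings of low complexity. Every string $x$ with $K(x) < m-d$ is the output $U(q)$ of some program $q$ with $|q| \le m-d-1$. Distinct outputs require distinct programs, so the number of such strings is at most the number of binary programs of length at most $m-d-1$:
\[
\sum_{i=0}^{m-d-1} 2^i = 2^{m-d}-1 < 2^{m-d}.
\]
Prefix-freeness of $U$ would give a sharper bound, but it is not needed here.

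Now compare the two counts. Since $|S_m| \ge 2^{m-d} > 2^{m-d}-1$, the set $S_m$ cannot consist entirely of strings with $K(x) < m-d$. Hence some $x \in S_m$ satisfies $K(x) \ge m-d$, and $P(x)=1$ holds by membership in $S_m$.

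There is no real obstacle in this argument. The only care needed is with the strictness of the inequality: I count programs of length at most $m-d-1$ (rather than $m-d$) so that the bound $2^{m-d}-1$ falls strictly below $|S_m|$. I should also note that $d$ is the same constant as in the assumption, so no additive $O(1)$ slack is introduced.
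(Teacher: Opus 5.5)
Your proposal is correct and is the same counting argument as the paper's proof. Both compare the at least $2^{m-d}$ compliant strings of length $m$ against the at most $2^{m-d}-1$ programs of length below $m-d$ (the paper's Lemma~\ref{lem:counting}), and you are also right that prefix-freeness is not needed for this count.
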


\begin{proof}
Among strings of length $m$, fewer than $2^{m-d}$ have Kolmogorov complexity less than $m-d$. Since at least $2^{m-d}$ satisfy $P(x)=1$, one must satisfy $K(x)\ge m-d$.
\end{proof}

\subsection{Incompleteness of Policy Verification}
We now show that no fixed formal verifier can certify all such high-complexity policy-compliant instances.

\begin{theorem}[Incompleteness of policy verification]
\label{thm:main}
Let $T$ be a sound computably enumerable formal theory sufficiently expressive to represent statements involving $P(x)$ and $K(x) > n$. Then there exists a constant $c_T$ such that for all sufficiently large $n$, no statement of the form
\[
P(x) \wedge K(x) > n
\]
is provable in $T$.
\end{theorem}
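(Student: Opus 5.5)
The argument is Chaitin's incompleteness theorem transferred to the conjunction $P(x)\wedge K(x)>n$. The constant $c_T$ is the threshold: for every $n \ge c_T$, no statement $P(x)\wedge K(x)>n$ is provable in $T$. The plan is a Berry-paradox style search program. Assuming $T$ proves such statements for large $n$, we build a short program that outputs a string $x$ whose complexity $T$ certifies to exceed $n$. The program's length is only about $\log n$, which contradicts soundness.

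\textbf{Step 1 (the search machine).} Since $T$ is computably enumerable, its theorems can be effectively enumerated. I would fix a prefix-free machine $M_T$ that, on input a self-delimiting encoding of $n$ (of length $2\log n + O(1)$, or $K(n)+O(1)$ via $U$), enumerates all pairs $(\varphi,\pi)$ and runs $C_T(\varphi,\pi)$. It halts with output $x$ at the first pair where $\pi$ is a valid proof of a formula of the syntactic form $P(x)\wedge K(x)>n$. Recognising this syntactic form is decidable, given that $T$ represents these statements in a fixed computable way, as the hypothesis of Theorem~\ref{thm:main} grants. By universality of $U$, if $M_T$ halts on $n$ with output $x$, then
\[
K(x) \le \log n + 2\log\log n + c'_T,
\]
where $c'_T$ depends only on $T$ (the length of the description of $M_T$ and the checker $C_T$).

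\textbf{Step 2 (soundness gives the contradiction).} Choose $c_T$ so that $n > \log n + 2\log\log n + c'_T$ holds for all $n\ge c_T$; this is possible because the right side is $o(n)$. Suppose for such $n$ some statement $P(x)\wedge K(x)>n$ were provable. Then $M_T$ halts on $n$ and outputs some $x_0$ carrying such a proof. Soundness of $T$ gives $K(x_0)>n$, while Step~1 gives $K(x_0)\le \log n + O(\log\log n)+c'_T < n$, a contradiction. So for every $n\ge c_T$, no such statement is provable. I would also note that the conjunct $P(x)$ plays no role in the contradiction: a proof of the conjunction yields a proof of $K(x)>n$, so the result is Chaitin's theorem restricted to policy-compliant strings.

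\textbf{Main obstacle.} The delicate point is formal bookkeeping rather than the combinatorics. One must ensure that ``$K(x)>n$'' inside $T$ is expressed so that soundness transfers it to the true value of $K$ for the fixed $U$. One must also ensure that the search in $M_T$ depends on $n$ only through its encoding, so that $c'_T$ is genuinely independent of $n$. I would handle both by stipulating the fixed computable translation $x,n\mapsto \ulcorner P(x)\wedge K(x)>n\urcorner$ as part of the expressiveness hypothesis. Combined with Lemma~\ref{lem:kc}, which gives compliant $x$ with $K(x)\ge m-d$ for infinitely many $m$, this shows that true statements of the forbidden form exist for arbitrarily large $n$ yet remain unprovable in $T$.
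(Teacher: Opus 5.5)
Your proposal is correct and follows essentially the same route as the paper's proof. Both run a proof-searching program, parameterized by $n$, of description length $c_T + O(\log n)$. It outputs an $x$ that $T$ certifies to satisfy $P(x)\wedge K(x)>n$, and soundness then forces $n < K(x) \le c_T + O(\log n)$, which fails for large $n$.

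Your differences are refinements rather than a different argument. You use an explicit self-delimiting encoding of $n$ for prefix complexity, and you note that the first output $x_0$ may differ from the assumed $x$ but still carries a proof. You also read $c_T$ as the threshold, whereas the paper uses $c_T$ as the description-length constant.
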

\begin{proof}[Proof sketch]
Assume that for arbitrarily large $n$, there exists a string $x$ such that $T$ proves $P(x) \wedge K(x) > n$. One can construct a program that enumerates proofs in $T$ and outputs the corresponding $x$, yielding a description of $x$ of size $c_T + O(\log n)$. This implies $K(x) \le c_T + O(\log n)$, contradicting the assumption that $K(x) > n$ for sufficiently large $n$. A full proof is provided in Appendix~\ref{appendix:proof-theorem-1}.
\end{proof}
This result separates truth from provability in the context of AI policy compliance, showing that some valid behaviors cannot be certified within any fixed formal system.

\begin{corollary}[Limit of complete policy verification]
\label{cor:limit-policy}
No fixed sound computably enumerable formal verifier (i.e., theory $T$) can certify all true policy-compliant instances whose Kolmogorov complexity exceeds any fixed bound.
\end{corollary}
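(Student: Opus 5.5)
We derive Corollary~\ref{cor:limit-policy} by combining Lemma~\ref{lem:kc}, which supplies true compliant instances of arbitrarily high complexity, with Theorem~\ref{thm:main}, which caps the complexity that $T$ can certify. Fix a sound computably enumerable theory $T$ expressive enough for Theorem~\ref{thm:main}, and let $c_T$ and the threshold $n_T$ be as given there. Then for every $n \ge n_T$, no statement $P(x) \wedge K(x) > n$ is provable in $T$.

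We read ``certify all true policy-compliant instances whose Kolmogorov complexity exceeds a bound $B$'' in the natural way. It means that for every $x$ with $P(x)=1$ and $K(x) > B$, the theory $T$ proves a statement witnessing both facts, namely $P(x) \wedge K(x) > n$ for some $n \ge B$. Equivalently, $C_T$ accepts a proof $\pi$ of that conjunction. Suppose, for contradiction, that $T$ does this for some fixed $B$. Set $n^\ast = \max(B, n_T)$.

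Next we use Assumption~\ref{ass:rich} and Lemma~\ref{lem:kc}. They give infinitely many $m$ with a string $x_m$ satisfying $P(x_m)=1$ and $K(x_m) \ge m-d$. Choose such an $m$ with $m - d > n^\ast + 1$. Then $x_m$ is a true compliant instance with $K(x_m) > n^\ast \ge B$, so by hypothesis $T$ certifies it. That is, $T$ proves $P(x_m) \wedge K(x_m) > n$ for some $n \ge n^\ast \ge n_T$. This contradicts Theorem~\ref{thm:main}, so no fixed $B$ works.

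The main obstacle is the interpretive step: settling what ``certify'' means. If certification meant proving only $P(x)$, with no complexity clause, Theorem~\ref{thm:main} would not apply directly. Indeed, a decidable $P$ can have every true instance $P(x)$ provable in a sufficiently strong $T$. So the plan commits to the reading used in the paper's own formulation, where certification of high-complexity compliance means proving the conjunction. I would state this explicitly at the start of the proof, so that the corollary is clearly a restatement of Theorem~\ref{thm:main}, instantiated on the nonempty supply of instances that Lemma~\ref{lem:kc} guarantees.
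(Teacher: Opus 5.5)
Your route is the one the paper intends: Theorem~\ref{thm:main} gives a threshold, and Lemma~\ref{lem:kc} gives compliant instances above it. Your observation that certification must mean proving the conjunction, not $P(x)$ alone, is also correct. However, the decisive step does not follow from your own definition. The hypothesis only gives $T \vdash P(x_m) \wedge K(x_m) > n$ for some $n \ge B$, not for some $n \ge n^\ast$. When $B < n_T$, the witnessing $n$ may lie in $[B, n_T)$, where Theorem~\ref{thm:main} says nothing. Choosing $x_m$ of huge complexity does not help, because its certificate only has to witness the weak bound $K(x_m) > B$.

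This cannot be patched for small $B$, because the conclusion ``no fixed $B$ works'' is false under your definition. Take any decidable $P$ satisfying Assumption~\ref{ass:rich} (e.g.\ $P \equiv 1$) and fix $B$. Let $T$ be Peano arithmetic plus, for each program $q$ with $|q| \le B$ on which $U$ diverges, the true axiom that $U(q)$ does not halt. This $T$ is sound and computably enumerable, and it proves every true instance of $P$. For every $x$ with $K(x) > B$ it also proves $K(x) > B$ by running through the finitely many short programs: the halting ones provably halt with their (different) outputs, and the divergent ones are excluded by the new axioms. So $T$ proves $P(x) \wedge K(x) > B$ for every compliant $x$ above $B$. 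There is no conflict with Theorem~\ref{thm:main}, since this $T$ simply has $n_T > B$.

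What your argument does establish, verbatim, is the case $B \ge n_T$, where $n^\ast = B$. That is the correct content of the corollary, and it matches the paper's own phrasing (``there exists a threshold beyond which\dots''). Read the corollary as saying $T$ cannot certify all compliant instances above \emph{every} bound. Assume the contrary, instantiate at $B = n_T$, and your Lemma~\ref{lem:kc} argument yields the contradiction.
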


\section{Implications for Verifiable AI}

Theorem~\ref{thm:main} reveals a fundamental limitation of verification-based AI safety: no fixed formal system can certify all valid behaviors of arbitrarily high complexity once their descriptive complexity exceeds a threshold.

This highlights a key distinction between detection-based and proof-based approaches. Detection-based systems attempt to characterize all valid behaviors through global rules, whereas proof-based systems require each instance to provide its own certificate of correctness.

Consider an augmented representation
\[
x = \langle z, y, \Pi, \pi \rangle,
\]
where $\pi$ is a proof that $P(x)=1$. Verification reduces to checking $\pi$, which can be done efficiently using modern succinct proof systems. This paradigm shift avoids the need for universal characterization and instead ensures correctness on a per-instance basis. It provides a principled foundation for building trustworthy AI systems in safety-critical settings.

\section{Adaptation to Autonomous Systems}

Consider an autonomous system where $z$ encodes the environment (objects, velocities, sensor inputs, etc.) and $y$ represents a decision such as a control action or trajectory. Let $\Pi$ encode safety constraints such as collision avoidance or minimum stopping distance. The predicate $P(x)$ then determines whether the decision $y$ is safe in context $z$.

In modern autonomous systems, decision-making may be implemented using learned models, including neural network-based policies, or hybrid architectures combining learning and structured components. Verification is therefore applied either directly to model outputs or through auxiliary mechanisms such as safety filters, reachability analysis, or constraint-checking procedures. While these approaches differ in implementation, they share a common objective: to certify that system behavior satisfies predefined safety constraints.

Theorem~\ref{thm:main} shows that these limitations are not merely computational, but fundamental. Even in idealized settings, there exist safe configurations whose correctness cannot be certified by any fixed formal verifier due to their intrinsic informational complexity.

Figure~\ref{fig:proof_vs_verification} illustrates this limitation, highlighting the gap between validity and certifiability and motivating proof-carrying approaches.

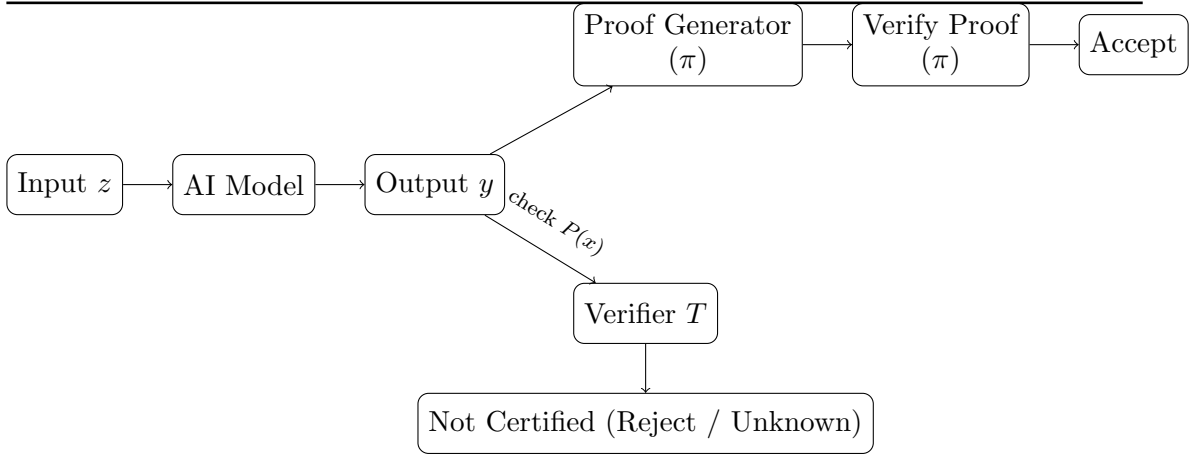
\begin{figure}[t]
\centering
\begin{tikzpicture}[
    node distance=0.65cm and 0.65cm,
    every node/.style={
        draw,
        rounded corners,
        align=center,
        minimum width=1cm,
        minimum height=0.8cm,
        font=\small
    }
]

\node (input) {Input $z$};
\node (model) [right=of input] {AI Model};
\node (output) [right=of model] {Output $y$};

\node (proofgen) [above right=0.9cm and 0.9cm of output] {Proof Generator\\$(\pi)$};
\node (proofcheck) [right=of proofgen] {Verify Proof \\$(\pi)$};
\node (accept) [right=of proofcheck] {Accept};

\node (verifier) [below right=0.9cm and 0.9cm of output] {Verifier $T$};
\node (reject) [below=of verifier] {Not Certified (Reject / Unknown)};

\draw[->] (input) -- (model);
\draw[->] (model) -- (output);

% Proof path
\draw[->] (output) -- (proofgen);
\draw[->] (proofgen) -- (proofcheck);
\draw[->] (proofcheck) -- (accept);

\draw[->] (output) -- node[midway, above, sloped, draw=none, fill=none, font=\scriptsize] {check $P(x)$} (verifier);
\draw[->] (verifier) -- (reject);

\end{tikzpicture}
\caption{Verification Paradigm: A fixed verifier $T$ may fail to certify policy-compliant instances due to fundamental incompleteness, while proof-carrying approaches attach an instance-specific proof $\pi$ enabling verifiable acceptance.}
\label{fig:proof_vs_verification}
\end{figure}

\subsection{Connection to Verifiable Perception Systems}

The incompleteness result in Theorem~\ref{thm:main} suggests a shift in how safety-critical systems should be designed. Rather than attempting to construct globally complete verification mechanisms, systems can instead rely on instance-level certification, where correctness is established through structured, verifiable evidence attached to each output.

This perspective is reflected in emerging designs for verifiable perception and decision-making systems. For example, in multi-vendor perception fusion, contributors provide structured outputs that are validated under shared constraints, while proof-based communication frameworks enable participants to attach succinct cryptographic proofs certifying compliance with safety policies.

These approaches align with the principle that, since no fixed verifier can certify all valid behaviors of arbitrarily high complexity, correctness must be established on a per-instance basis. In this setting, verification becomes the validation of explicit certificates rather than exhaustive characterization of all possible behaviors.

\section{Conclusion}

We established an information-theoretic incompleteness result for AI policy verification, showing that no fixed formal verifier can certify all policy-compliant instances beyond a complexity threshold. This limitation motivates a shift toward proof-based verification paradigms, providing a principled and mathematically grounded direction for the design of verifiable and trustworthy AI systems.

\section*{Acknowledgments}

The author thanks Apostol Vassilev for his work~\citep{vassilev2025robustaisecurityalignment} on information-theoretic limitations in AI security and alignment, which served as a key inspiration for this work.

\begin{appendices}

\section{Appendix: Incompleteness of Policy Verification}

This appendix provides a self-contained proof of Theorem~\ref{thm:main} using standard arguments from Kolmogorov complexity. For completeness, we briefly restate the necessary definitions.

\subsection{Preliminaries}

Let $U$ be a universal prefix-free Turing machine. The prefix Kolmogorov complexity of a string $x$ is
\[
K(x) := \min \{\, |q| : U(q) = x \,\}.
\]

A formal theory $T$ is said to be \emph{sound} if every statement provable in $T$ is true, and \emph{computably enumerable} if the set of its proofs can be effectively enumerated.

\subsection{Counting Bound}

\begin{lemma}[Counting bound]
\label{lem:counting}
For any integer $n$, the number of strings $x$ satisfying $K(x) < n$ is less than $2^n$.
\end{lemma}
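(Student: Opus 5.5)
The plan is a direct counting argument over programs. By definition of $K$, a string $x$ has $K(x) < n$ only if there is a program $q$ with $|q| \le n-1$ and $U(q) = x$. So the strings of complexity below $n$ are contained in the image under $U$ of the set of programs of length at most $n-1$.

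The key steps, in order, are as follows.

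\textbf{Step 1: reduce to counting programs.} Since $U$ is a function, each program $q$ outputs at most one string. Hence the map $q \mapsto U(q)$, restricted to the programs on which $U$ halts, surjects onto $\{x : K(x) < n\}$. Choosing for each such $x$ a shortest program gives an injection of this set into $\{0,1\}^{\le n-1}$.

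\textbf{Step 2: count the binary strings of length at most $n-1$.} There are
\[
\sum_{i=0}^{n-1} 2^i = 2^n - 1 < 2^n
\]
of them, which already gives the bound.

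\textbf{Step 3 (optional sharpening).} One could also use prefix-freeness of $U$ via the Kraft inequality $\sum_q 2^{-|q|} \le 1$. This yields at most $2^{n-1}$ programs of length less than $n$, but it is not needed for the stated inequality.

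There is no real obstacle here; the argument is elementary. The only care needed is to handle the edge cases. For $n \le 0$ the set $\{x : K(x) < n\}$ is empty, so the bound holds trivially. For $n \ge 1$ the computation in Step 2 applies. One should also note that this lemma is exactly the fact invoked informally in the proof of Lemma~\ref{lem:kc}.
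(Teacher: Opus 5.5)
Your proposal is correct and is essentially the paper's argument: both bound $|\{x : K(x) < n\}|$ by the $2^n - 1$ binary strings of length at most $n-1$. You additionally make explicit the injection $x \mapsto$ (a shortest program for $x$) and the trivial case $n \le 0$, both of which the paper leaves implicit; as you note, the prefix-freeness the paper invokes is not needed here and would only enter through Kraft's inequality to sharpen the count to $2^{n-1}$.
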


\begin{proof}
There are at most $2^n - 1$ binary programs of length strictly less than $n$. Since $U$ is prefix-free, the set of valid programs forms a prefix-free code, and thus there are at most $2^n - 1$ programs of length less than $n$.
\end{proof}

\subsection{High-Complexity Instances}

\begin{lemma}[Existence of incompressible strings]
\label{lem:incompressible}
For every $n$, there exists a string $x$ such that $K(x) \ge n$.
\end{lemma}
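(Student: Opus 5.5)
The plan is a pigeonhole argument built on the counting bound of Lemma~\ref{lem:counting}. Fix $n$ and look at the finite set $S_n := \{x \in \{0,1\}^* : K(x) < n\}$. By Lemma~\ref{lem:counting}, $|S_n| < 2^n$, so $S_n$ cannot contain every binary string. Any string outside $S_n$ satisfies $K(x) \ge n$, which is exactly the claim. To make this concrete, I will restrict attention to strings of length exactly $n$. There are $2^n$ of them, and strictly fewer than $2^n$ strings of any length lie in $S_n$. So at least one string $x \in \{0,1\}^n$ is not in $S_n$, and that $x$ witnesses $K(x) \ge n$.

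The key steps are as follows.
\begin{enumerate}
\item Justify the bound $|S_n| < 2^n$. Each $x \in S_n$ has a shortest program $q_x$ with $U(q_x) = x$ and $|q_x| < n$. The map $x \mapsto q_x$ is injective because $U$ is a function, so distinct outputs need distinct programs. The number of binary strings of length less than $n$ is $\sum_{i=0}^{n-1} 2^i = 2^n - 1$. Prefix-freeness can only lower this count further, and the bound does not need it.
\item Compare with the $2^n$ strings of length $n$.
\item Conclude by pigeonhole.
\end{enumerate}

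I expect no serious obstacle. The one point to state carefully is the injectivity of $x \mapsto q_x$: the proof of Lemma~\ref{lem:counting} as written talks about counting programs, and that count transfers to counting strings of low complexity only because each such string has its own program.

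It may be worth adding a remark on the strengthening the paper actually uses in Lemma~\ref{lem:kc}. The same count shows that fewer than $2^{m-d}$ strings have $K(x) < m - d$, so within any set of at least $2^{m-d}$ strings (in particular, the compliant strings of length $m$) some string has complexity at least $m-d$. Hence Lemma~\ref{lem:incompressible} and Lemma~\ref{lem:kc} follow from one template. For prefix complexity, one can even get $K(x) \ge n$ for some $x$ of length $n$ together with the usual $n + K(n)$ refinements, but that is not needed here.
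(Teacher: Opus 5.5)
Your proposal is correct and matches the paper's proof: both compare the $2^n$ strings of length $n$ with the fewer than $2^n$ strings satisfying $K(x)<n$ (Lemma~\ref{lem:counting}) and conclude by pigeonhole. Your explicit injectivity argument for $x \mapsto q_x$, together with the note that prefix-freeness is not needed for the count, makes explicit a step that the paper's proof of the counting bound leaves implicit.
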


\begin{proof}
There are $2^n$ binary strings of length $n$, while fewer than $2^n$ strings can have complexity less than $n$ by Lemma~\ref{lem:counting}. Hence at least one string must satisfy $K(x) \ge n$.
\end{proof}

\subsection{Proof of Theorem~\ref{thm:main}}
\label{appendix:proof-theorem-1}
We restate the theorem:
\begin{theorem}[Restatement of Theorem~\ref{thm:main}]
For any sound computably enumerable theory $T$ sufficiently expressive to represent statements involving both $P(x)$ and $K(x) > n$, there exists a constant $c_T$ such that for all sufficiently large $n$, no statement of the form
\[
P(x) \wedge K(x) > n
\]
is provable in $T$.
\end{theorem}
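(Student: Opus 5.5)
We will prove the theorem with Chaitin's Berry-paradox argument, adapted so that it ignores the conjunct $P(x)$. The constant $c_T$ will be the length of a fixed search program built from $T$. The threshold will be the point beyond which $n$ exceeds $c_T + O(\log n)$.

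\textbf{Step 1 (the search program).} Since $T$ is computably enumerable, its theorems can be listed effectively. For each $n$, define a machine $M_n$ that runs through this list. $M_n$ halts at the first theorem of the syntactic form $P(x)\wedge K(x)>n$ (for any string $x$) and outputs that $x$. Writing $M_n$ as a prefix-free program for $U$ needs two pieces. The first is a fixed part, independent of $n$: an enumerator of $T$ plus a parser recognising the target formula shape. Its length, up to the universal machine's simulation overhead, is $c_T$. The second is a self-delimiting encoding of $n$, of length $O(\log n)$; for example, $2\lceil\log_2(n+1)\rceil+2$ bits. Hence, whenever $M_n$ halts with output $x$,
\[
K(x)\le c_T + 2\log_2 n + O(1).
\]

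\textbf{Step 2 (using soundness).} Suppose, for contradiction, that for some $n$ the theory $T$ proves a statement $P(x)\wedge K(x)>n$. Then $M_n$ halts and outputs some $x_0$. This $x_0$ need not be the same $x$ as in the assumed proof; it is whichever one appears first. By soundness of $T$, the statement about $x_0$ is true, so $K(x_0)>n$. Combining this with Step 1 gives
\[
n < c_T + 2\log_2 n + O(1).
\]
This fails for all $n \ge n_T$, where $n_T$ depends only on $c_T$. So for all sufficiently large $n$, no such statement is provable, which is exactly the claim.

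Notice that the conjunct $P(x)$ plays no role in the argument. It only makes the provable set smaller, so the theorem follows just as well from Chaitin's theorem for the bare statements $K(x)>n$. We will state this explicitly. Lemma~\ref{lem:kc} is not needed for the theorem itself. It is used only afterwards, to derive Corollary~\ref{cor:limit-policy}: true instances with $P(x)=1$ and $K(x)>n$ exist, yet none of them is certified.

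\textbf{Main obstacle.} The delicate point is making the constant $c_T$ rigorous. Three things must be checked:
\begin{itemize}
\item $c_T$ is independent of $n$;
\item the encoding of $n$ is prefix-free, so that $U$ can parse the concatenated program;
\item the phrase ``sufficiently expressive'' means the formulas $P(x)\wedge K(x)>n$ are effectively recognisable, and that $x$ and $n$ can be extracted from them computably.
\end{itemize}
We will settle this by fixing a computable Gödel numbering and taking $c_T$ to be the length of the code for the fixed part. Alternatively, we can use the invariance theorem to absorb all machine overhead into an additive $O(1)$.
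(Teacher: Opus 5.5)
Your proposal is correct and follows essentially the same route as the paper's appendix proof. Both use a Berry/Chaitin-style program (the paper's $A_{T,n}$, your $M_n$) that enumerates proofs of $T$ and outputs the first $x$ for which $P(x)\wedge K(x)>n$ is proved. This yields $K(x)\le c_T+O(\log n)$, which contradicts the soundness-guaranteed $K(x)>n$ for large $n$. Your version is slightly more careful (explicit self-delimiting encoding of $n$, noting that the output $x_0$ may differ from the assumed witness, and observing that the conjunct $P(x)$ plays no role), but the argument is the same.
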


\begin{proof}
Assume for contradiction that for infinitely many $n$, there exists a string $x$ such that $T$ proves
\[
P(x) \wedge K(x) > n.
\]

Since $T$ is computably enumerable, we can construct a program $A_{T,n}$ that operates as follows:

\begin{enumerate}
    \item Enumerate all candidate proofs in $T$.
    \item For each proof $\pi$, check whether it proves a statement of the form $P(x) \wedge K(x) > n$ for some explicitly represented string $x$.
    \item Extract $x$ from the statement and output it.
\end{enumerate}

Because $T$ is assumed to prove such a statement, the program $A_{T,n}$ eventually halts and outputs a corresponding $x$.

The description length of $A_{T,n}$ consists of:
\begin{itemize}
    \item a fixed proof enumeration procedure,
    \item a fixed description of the theory $T$,
    \item an encoding of the integer $n$, which requires $O(\log n)$ bits.
\end{itemize}

Thus there exists a constant $c_T$ such that
\[
K(x) \le c_T + O(\log n).
\]

On the other hand, since $T$ is sound and proves $K(x) > n$, the statement is true, and hence
\[
K(x) > n.
\]

Combining the inequalities yields
\[
n < K(x) \le c_T + O(\log n),
\]
which is impossible for sufficiently large $n$ since $c_T + O(\log n) < n$ eventually.

Therefore, for sufficiently large $n$, no statement of the form
\[
P(x) \wedge K(x) > n
\]
can be proved in $T$.
\end{proof}

\subsection{Remark}

The proof shows that the limitation arises from a mismatch between the descriptive capacity of the formal system $T$ and the information content of individual instances. This argument is independent of computational constraints and applies to any sound computably enumerable verification system.
\end{appendices}

\bibliographystyle{abbrvnat}
\bibliography{reference}

\end{document}